\newcommand{\hl}[1]{\textcolor{black}{#1}}
\newtheorem{proposition}{Proposition}
\newtheorem{definition}{Definition}
\newcommand{\Exp}{\mathbb{E}}
\newcommand{\PR}{\mathbb{P}}
\newcommand{\reals}{\mathbb{R}}
\newcommand{\vw}{\mathbf{w}}
\newcommand{\vx}{\mathbf{x}}
\newcommand{\vz}{\mathbf{z}}
\newcommand{\hy}{\hat{y}}
\newcommand{\hY}{\hat{Y}}
\newcommand{\vtheta}{\boldsymbol{\theta}}
\newcommand{\vX}{\mathbf{X}}
\newcommand{\vZ}{\mathbf{Z}}
\newcommand{\vW}{\mathbf{W}}
\newcommand{\cD}{\mathcal{D}}
\newcommand{\cH}{\mathcal{H}}
\newcommand{\cY}{\mathcal{Y}}
\newcommand{\cX}{\mathcal{X}}
\newcommand{\cZ}{\mathcal{Z}}
\newcommand{\cW}{\mathcal{W}}
\newcommand{\cF}{\mathcal{F}}
\newcommand{\cL}{\mathcal{L}}
\newcommand{\qed}{\hfill$\rule{2mm}{3mm}$}
\newenvironment{proof}{\par{\noindent \bf Proof }}{\qed \par}
\begin{document}
\title{A Moral Framework for Understanding of Fair ML\\through Economic Models of Equality of Opportunity}

\author{
   \makebox[.34\linewidth]{Hoda Heidari }\\
   ETH Z{\"u}rich\\
   \url{hheidari@inf.ethz.ch} \\
   \and
   \makebox[.34\linewidth]{Michele Loi} \\
   University of Z{\"u}rich\\
   \url{michele.loi@uzh.ch} \\
   \and
   \makebox[.34\linewidth]{Krishna P. Gummadi} \\
   MPI-SWS\\
   \url{gummadi@mpi-sws.org} \\
   \and
   \makebox[.34\linewidth]{Andreas Krause}\\
   ETH Z{\"u}rich\\
   \url{krausea@ethz.ch} \\
}

\date{}

\maketitle

\begin{abstract}
We map the recently proposed notions of algorithmic fairness to economic models of Equality of opportunity (EOP)---an extensively studied ideal of fairness in political philosophy. We formally show that through our conceptual mapping, many existing definition of algorithmic fairness, such as predictive value parity and equality of odds, can be interpreted as special cases of EOP.
In this respect, our work serves as a unifying moral framework for understanding existing notions of algorithmic fairness. Most importantly, this framework allows us to explicitly spell out the moral assumptions underlying each notion of fairness, and interpret recent fairness impossibility results in a new light. Last but not least and inspired by luck egalitarian models of EOP, we propose a new family of measures for algorithmic fairness. We illustrate our proposal empirically and show that employing a measure of algorithmic (un)fairness when its underlying moral assumptions are not satisfied, can have devastating consequences for the disadvantaged group's welfare. 
\end{abstract}

\section{Introduction}
\emph{Equality of opportunity} (EOP) is a widely supported ideal of fairness, and it has been extensively studied in political philosophy over the past 50 years~\citep{rawls2009theory,sen1980equality,dworkin1981equality1,dworkin1981equality2,arneson1989equality,cohen1989currency}. The concept assumes the existence of a broad range of \emph{positions}, some of which are more desirable than others. In contrast to \emph{equality of outcomes} (or positions), an equal opportunity policy seeks to create a \emph{level playing field} among individuals, after which they are free to compete for different positions. The positions that individuals earn under the condition of equality of opportunity reflect their \emph{merit} or \emph{deservingness}, and for that reason, inequality in outcomes is considered ethically acceptable~\citep{roemer2002equality}. 

Equality of opportunity emphasizes the importance of personal (or native) qualifications, and seeks to minimize the impact of circumstances and arbitrary factors on individual outcomes~\citep{cohen1989currency,dworkin1981equality1,dworkin1981equality2,rawls2009theory}. For instance within the context of employment, one (narrow) interpretation of EOP requires that desirable jobs are given to those persons most likely to perform well in them---e.g. those with the necessary education and experience---and not according to arbitrary factors, such as race or family background.
According to \citeauthor{rawls2009theory}'s (broader) interpretation of EOP, native talent and ambition can justify inequality in social positions, whereas circumstances of birth and upbringing such as sex, race, and social background can not.
Many consider the distinction between morally acceptable and unacceptable inequality the most significant contribution of the egalitarian doctrine~\citep{roemer2015equality}.

Prior work in economics has sought to formally characterize conditions of equality of opportunity to allow for its precise measurement in practical domains (see e.g.~\citep{fleurbaey2008fairness,roemer2009equality}). At a high level, in these models an individual's outcome/position is assumed to be affected by two main factors: his/her \emph{circumstance} $c$ and \emph{effort} $e$. Circumstance $c$ is meant to capture all factors that are deemed irrelevant, or for which the individual should not be held morally \emph{accountable}; for instance $c$ could specify the socio-economic status he/she is born into. Effort $e$ captures all accountability 
 factors---those that can morally justify inequality. (Prior work in economics refers to $e$ as effort for the sake of concreteness, but $e$ summarizes \emph{all} factors for which the individual can be held morally accountable; the term ``effort" should not be interpreted in its ordinary sense here.) For any circumstance $c$ and any effort level $e$, a policy $\phi$ induces a distribution of \emph{utility} among people of circumstance $c$ and effort $e$. Formally, an EOP policy will ensure that an individual's final utility will be, to the extent possible, only a function of their effort and not their circumstances.

While EOP has been traditionally discussed in the context of employment practices, its scope has been expanded over time to other areas, including lending, housing, college admissions, and beyond~\citep{wiki-EOP}. Decisions made in such domains are increasingly automated and made through Algorithmic Data Driven Decision Making systems (A3DMs). We argue, therefore, that it is only natural to study fairness for A3DMs through the lens of EOP. In this work, we draw a formal connection between the recently proposed notions of fairness for supervised learning and economic models of EOP. We observe that in practice, predictive models inevitably make errors (e.g. the model may mistakenly predict that a credit-worthy applicant won't pay back their loan in time). Sometimes these errors are beneficial to the subject, and sometimes they cause harm. We posit that in this context, EOP would require similar individuals (in terms of what they can be held accountable for) to have the same prospect of receiving this benefit/harm, irrespective of their irrelevant characteristics. 

More precisely, we assume that a person's features can be partitioned into two sets: those for which we consider it morally acceptable to hold him/her accountable, and those for which it is not so. We will broadly refer to the former set of attributes as the individual's \emph{accountability} features, and the latter, as their \emph{arbitrary} or \emph{irrelevant} features. Note that there is considerable disagreement on the criteria to determine what factors should belong to each category. \citet{roemer1993pragmatic} for instance proposes that societies decide this democratically. We take a neutral stance on this issue and leave it to domain experts and stake-holders to reach a resolution. Throughout, we assume this partition has been identified and is given.

We distinguish between an individual's \emph{actual} and \emph{effort-based} utility when subjected to algorithmic decision making. We assume an individual's \emph{advantage} or \emph{total utility} as the result of being subject to A3DMs, is the difference between their actual and effort-based utility (Section~\ref{sec:model}). 
Our main conceptual contribution is to map the supervised learning setting to economic models of EOP by treating predictive models as policies, irrelevant features as individual circumstance, and effort-based utilities as effort (Figure~\ref{fig:map}). We show that using this mapping many existing notions of fairness for classification, such as predictive value parity~\citep{kleinberg2016inherent} and equality of odds~\citep{hardt2016equality}, can be interpreted as special cases of EOP. In particular, equality of odds is equivalent to Rawlsian EOP, if we assume all individuals with the same true label are equally accountable for their labels and have the same effort-based utility (Section \ref{sec:rawlsian}). Similarly, predictive value parity is equivalent to luck egalitarian EOP if the predicted label/risk is assumed to reflect an individual's effort-based utility (Section \ref{sec:egalitarian}).
In this respect, our work serves as a unifying framework for understanding existing notions of algorithmic fairness as special cases of EOP. Importantly, this framework allows us to explicitly spell out the moral assumptions underlying each notion of fairness, and interpret recent fairness impossibility results~\citep{kleinberg2016inherent} in a new light.
\begin{figure}
  \centering
    \includegraphics[width=0.48\textwidth]{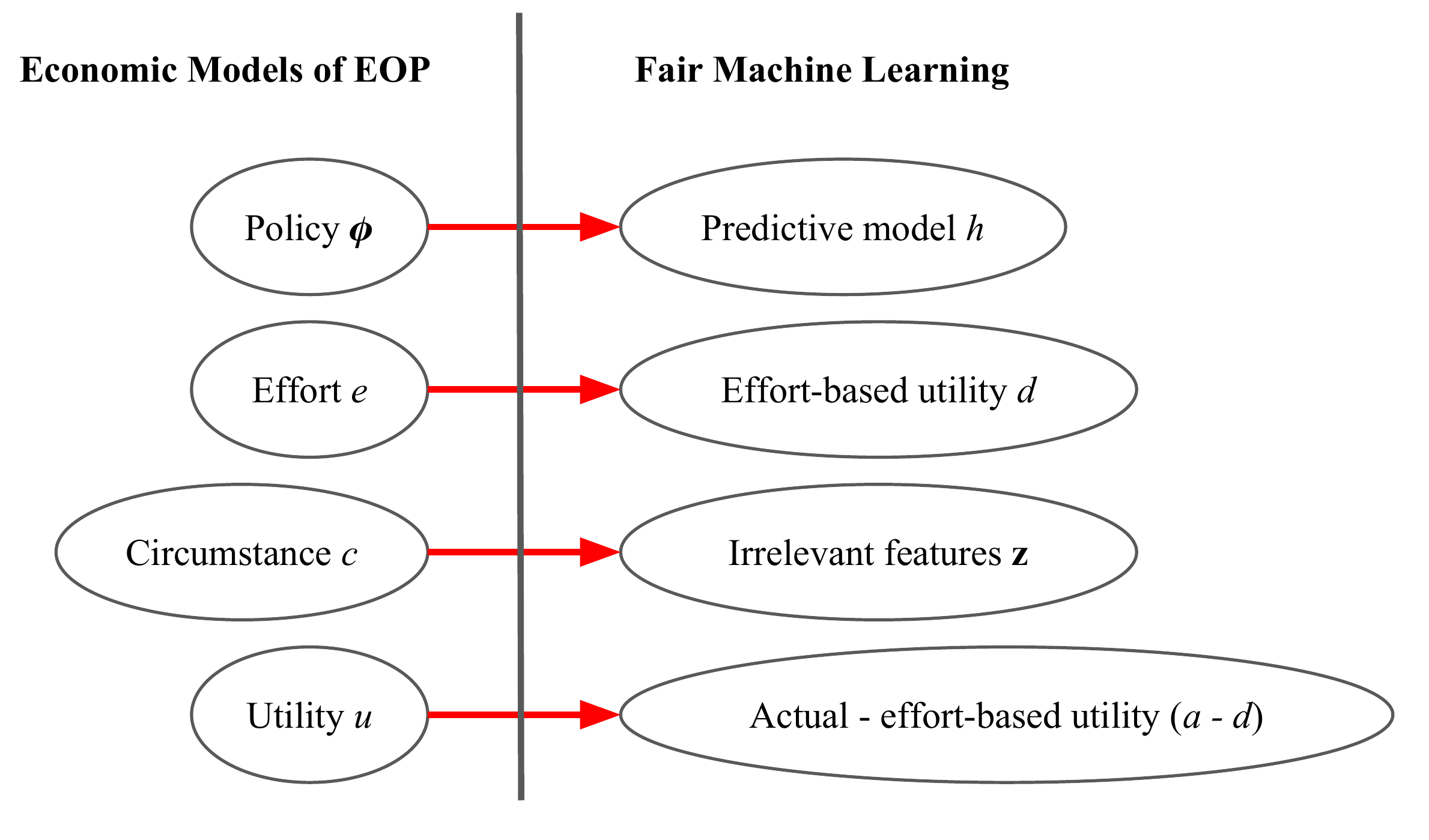}
    \caption{Our proposed conceptual mapping between Fair ML and economic literature on EOP.}\label{fig:map}
\end{figure}

Last but not least, inspired by Roemer's model of egalitarian EOP we present a new family of measures for algorithmic (un)fairness, applicable to supervised learning tasks beyond binary classification. We illustrate our proposal on a real-world regression dataset, and compare it with existing notions of fairness for regression. We  empirically show that employing the wrong measure of algorithmic fairness---when the moral assumptions underlying it are not satisfied---can have devastating consequences on the welfare of the disadvantaged group.

\hl{
We emphasize that our work is \emph{not} meant to advocate for any particular notion of algorithmic fairness, rather our goal is to establish---both formally and via real-world examples---that implicit in each notion of fairness is a distinct set of moral assumptions about decision subjects; therefore, each notion of fairness is suitable only in certain application domains and not others. By making these assumptions explicit, our framework presents practitioners with a \emph{normative} guideline to choose the most suitable notion of fairness specifically for every real-world context in which A3DMs are to be deployed.
}

\subsection{Equality of Opportunity: An Overview}\label{sec:related}
Equality of opportunity has been extensively debated among political philosophers. 
Philosophers such as \citet{rawls2009theory}, \citet{dworkin1981equality1}, \citet{arneson1989equality}, and \citet{cohen1989currency} contributed to the egalitarian school of thought by proposing different criteria for making the cut between arbitrary and accountability factors. The detailed discussion of their influential ideas is outside the scope of this work, and the interested reader is referred to excellent surveys by~\citet{sep-equal-opportunity} and \citet{roemer2015equality}.

%
In this section, we briefly mention several prominent interpretations of EOP and discuss their relevance to A3DMs. 
Following \citet{arneson2018four}, we recount three main conceptions of equality of opportunity:
\begin{itemize}
\item \textbf{Libertarian EOP:} A person is morally at liberty to do what she pleases with what she legitimately owns (e.g. self, business, etc.) as long as it does not infringe upon other people's moral rights (e.g. the use of force, fraud, theft, or damage on persons or property of another individual is considered a violation of their rights). Other than these restrictions, any outcome that occurs as the result of people's free choices on their legitimate possessions is considered just. In the context of A3DMs and assuming no gross violations of individuals' data privacy rights, this interpretation of EOP leaves the enterprise at total liberty to implement any algorithm it wishes for decision making. The algorithm can utilize all available information, including individuals' sensitive features such as race or gender, to make (statistically) accurate predictions. 

\item \textbf{Formal EOP:} Also known as ``careers open to talents", formal EOP require desirable social positions to be open to all who possess the attributes relevant for the performance of the duties of the position (e.g. anyone who meets the formal requirements of the job) and wish to apply for them~\citep{roemer2009equality}. The applications must be assessed only based on relevant attributes/qualifications that advances the morally innocent goals of the enterprise. Direct discrimination based on factors deemed arbitrary (e.g. race or gender) is therefore prohibited under this interpretation of EOP. 
Formal EOP would permit differences in people's circumstances---e.g. their gender---to have indirect, but nonetheless deep impact on their prospects. For instance, if women are less likely to receive higher education due to prejudice against female students, as long as a hiring algorithm is blind to gender and applies the same educational requirement to male and female job applicants, formal equality of opportunity is maintained. In context of A3DMs, Formal EOP is equivalent to the removal of the sensitive feature information from the learning pipeline. In the fair ML community, this is sometimes referred to as ``fairness through blindness".

\item \textbf{Substantive EOP:} 
Substantive EOP moves the starting point of the competition for desirable positions further back in time, and requires not only open competition for desirable positions, but also fair access to the necessary qualifications for the position. This implies access to qualifications (e.g. formal requirements for a job) should not to be affected by arbitrary factors, such as race gender or social class. 
The concept is closely related to \emph{indirect discrimination}: if the A3DM indirectly discriminates against people with a certain irrelevant feature (e.g. women or African Americans) this may be an indication that the irrelevant/arbitrary feature has played a role in the acquisition of the requirements. When there are no alternative morally acceptable explanations for it, indirect discrimination is often considered in violation of substantive EOP. 
\end{itemize}
Our focus in this work is on substantive EOP, and in particular, on two of its refinements, called Rawlsian EOP and Luck Egalitarian EOP.

\paragraph{\textbf{Rawlsian EOP}} According to Rawls, those who have the same level of talent or ability and are equally willing to use them must have the same \emph{prospect} of obtaining desirable social positions, regardless of arbitrary factors such as socio-economic background~\citep{rawls2009theory}. This Rawlsian conception of EOP has been translated into precise mathematical terms as follows~\citep{lefranc2009equality}:
let $c$ denote circumstance, capturing factors that are not considered legitimate sources of inequality among individuals. Let scalar $e$ summarize factors that are viewed as legitimate sources of inequality. For the sake of brevity, the economic literature refer to $e$ as ``effort", but $e$ is meant to summarize all factors an individual can be held morally accountable for.\footnote{Note that in Rawls's formulation of EOP, talent and ambition are treated as a legitimate source of inequality, even when they are independent of a person's effort and responsibility. The mathematical formulation proposed here includes talent, ability and ambition all in the scalar $e$. Whether natural talent should be treated as a legitimate source of inequality is a subject of controversy. As stated earlier, throughout this work we assume such questions have been already answered through a democratic process and/or deliberation among stakeholders and domain experts.} Let $u$ specify individual utility, which is a consequence of effort, circumstance, and policy. Formally, let $F^\phi(. \vert c,e)$ specify the cumulative distribution of utility under policy $\phi$ at a fixed effort level $e$ and circumstance $c$. 
Rawlsian/Fair EOP requires that for individuals with similar effort $e$, the distribution of utility should be the same---regardless of their circumstances:
\begin{definition}[Rawlsian Equality of Opportunity (R-EOP)]
A policy $\phi$ satisfies Rawlsian EOP if for all circumstances $c,c'$ and all effort levels $e$, $$F^\phi(. \vert c,e) = F^\phi(. \vert c',e).$$
\end{definition}
Note that this conception of EOP takes an \emph{absolutist} view of effort: it assumes $e$ is a scalar whose absolute value is meaningful and can be compared across individuals. This view requires effort $e$ to be inherent to individuals and not itself impacted by the circumstance $c$ or the policy $\phi$. 

\paragraph{\textbf{Luck Egalitarian EOP}} Unlike fair EOP, luck egalitarian EOP offers a \emph{relative} view of effort, and allows for the possibility of circumstance $c$ and implemented policy $\phi$ impacting the distribution of effort $e$. In this setting, \citet{roemer2002equality} argues that ``in comparing efforts of individuals in different types [circumstances], we should somehow adjust for the fact that those efforts are drawn from distributions which are different". As the solution he goes on to propose ``measuring a person's effort by his rank in the effort distribution of his type/circumstance, rather than by the absolute level of effort he expends".

Formally, let $F^{c, \phi}_E$ be the effort distribution of type $c$ under policy $\phi$. Roemer argues that ``this distribution is a characteristic of the type $c$, not of any individual belonging to the type. Therefore, an inter-type comparable measure of effort must factor out the goodness or badness of this distribution". Roemer declares two individuals as having exercised the same level of effort if they sit at the same quantile or rank of the effort distribution for their corresponding types. 
More precisely, let the indirect utility distribution function $F^\phi(. \vert c, \pi)$ 
 specify the distribution of utility for individuals of type $c$ at the $\pi$th quantile ($0 \leq \pi \leq 1$) of $F^{c,\phi}_{E}$.
Equalizing opportunities means choosing the policy $\phi$ to equalize utility distributions, $F^\phi(. \vert c, \pi)$, across types at fixed levels of $\pi$:\footnote{Note that in Roemer's original work, utility is assumed to be a deterministic function of $c,e,\phi$. Here we changed the definition slightly to allow for the possibility of non-deterministic dependence.}
\begin{definition}[Luck Egalitarian Equality of Opportunity (e-EOP)]
A policy $\phi$ satisfies Luck Egalitarian EOP if for all $\pi \in [0,1]$ and any two circumstances $c,c'$:
$$F^\phi(. \vert c, \pi) = F^\phi(. \vert c', \pi).$$
\end{definition}
To better understand the subtle difference between Rawlsian EOP and luck egalitarian EOP, consider the following example: suppose in the context of employment decisions, we consider years of education as effort, and gender as circumstance. Suppose Alice and Bob both have 5 years of education, whereas Anna and Ben have 3 and 7 years of education, respectively. Rawlsian EOP would require Alice and Bob to have the same employment prospects, so it would ensure that factors such as sexism wouldn't affect Alice's employment chances, negatively (compared to Bob). Luck egalitarian EOP goes a step further and calculates everyone's rank (in terms of years of education) among all applicants of their gender. In our example, Alice is ranked 1st and Anna is ranked 2nd. Similarly, Bob is ranked 2nd and Ben is ranked 1st. A luck egalitarian EOP policy would ensure that Alice and Ben have the same employment prospects, and may indeed assign Bob to a less desirable position than Alice---even though they have similar years of education. 

Next, we will discuss the above two refinements of substantive EOP in the context of supervised learning.

\section{Setting}\label{sec:model}
As a running example in this section, we consider a business owner who uses A3DM to make salary decisions so as to improve business productivity/revenue. We assume a higher salary is considered to be more desirable by all employees. An A3DM is designed to predict the salary that would improve the employee's performance at the job, using historical data. This target variable, as we will shortly formalize, does not always coincide with the salary the employee is morally accountable/qualified for.

We consider the standard supervised learning setting. A learning algorithm receives a training data set $T=\{(\vx_i,y_i)\}_{i=1}^n$ consisting of $n$ instances, where $\vx_i \in \cX$ specifies the feature vector for individual $i$ and $y_i \in \cY$, the true label for him/her (the salary that would improve his/her performance). Unless otherwise specified, we assume $\cY = \{0,1\}$ and $\cX = \reals^k$. Individuals are assumed to be sampled i.i.d. from a distribution $F$. The goal of a learning algorithm is to use the training data $T$ to fit a \emph{model} (or pick a hypothesis) $h: \cX \rightarrow \cY$ that accurately predicts the label for new instances. Let $\cH$ be the hypothesis class consisting of all the models the learning algorithm can choose from.
A learning algorithm receives $T$ as the input; then utilizes the data to select a model $h \in \cH$ that minimizes some empirical loss, $\cL(T,h)$. We denote the predicted label for an individual with feature vector $\vx$ by $\hy$ (i.e. $\hy = h(\vx)$). 

Consider an individual who is subject to algorithmic decision making in this context. To discuss EOP, we begin by assuming that his/her observable attributes, $\vx$, can be partitioned into two disjoint sets, $\vx= \langle \vz, \vw\rangle$, where $\vz \in \cZ$ denotes the individual's observable characteristics for which he/she is considered morally \emph{not} accountable---this could include sensitive attributes such as race or gender, as well as less obvious attributes, such as zip code. We refer to $\vz$ as morally \emph{arbitrary} or \emph{irrelevant} features. 
%
Let $\vw \in \cW$ denote observable attributes that are deemed morally acceptable to hold the individual accountable for; in the running example, this could include the level of job-related education and experience. We refer to $\vw$ as \emph{accountability} or \emph{relevant} features.
We emphasize once again that determining what factors should belong to each category is entirely outside the scope of this work. We assume throughout that a resolution has been previously reached in this regard---through the appropriate process---and is given to us. 

Let $d \in [0,1]$ specify the individual's \emph{effort-based utility}---the utility he/she should receive solely based on their accountability factors (e.g. the salary an employee should receive based on his/her years of education and job-related experience. Note that this may be different from their actual salary). Effort-based utility $d$ is not directly observable, but we assume it is estimated via a function $g:\cX \times \cY \times \cH \rightarrow \reals^+$, such that $$d = g(\vx, y, h).$$
Function $g$ links the observable information, $\vx,y,$ and $h$, to the effort-based utility, $d$.
%
Let $a \in [0,1]$ be the \emph{actual utility} the individual receives subsequent to receiving prediction $\hy$ (e.g. the utility they get as the result of their predicted salary). We assume there exists a function $f:\cX \times \cY \times \cH \rightarrow \reals^+$ that estimates $a$: 
$$a = f(\vx, y, h).$$
 Throughout, for simplicity we assume higher values of $a$ and $d$ correspond to more desirable conditions.

Let $u$ be the \emph{advantage} or overall \emph{utility} the individual earns as the result of being subject to predictive model $h$. 
For simplicity and unless otherwise specified, we assume $u$ has the following simple form: 
\begin{equation}\label{eq:util}
u = a - d.
\end{equation}
That is, $u$ captures the discrepancy between an individual's \emph{actual} utility ($a$) and their \emph{effort-based} utility $d$. 
With this formulation, an individual's utility is 0 when their actual and effort-based utilities coincide (i.e. $u=0$ if $a = d$). 


We consider the predictive advantage $u$ to be the currency of equality of opportunity for supervised learning. That is, $u$ is what we hope to equalize across similar individuals (similar in terms of what they can be held accountable for).  Our moral argument for this choice is as follows: the predictive model $h$ inevitably makes errors in assigning individuals to their effort-based utilities---this could be due to the target variable not properly reflecting effort-based utility, the prediction being used improperly, or simply a consequence of generalization. Sometimes these errors are beneficial to the subject, and sometimes they cause harm. Advantage $u$ precisely captures this benefit/harm. EOP in this setting requires that all individuals, who do not differ in ways for which they can be held morally accountable, have the same prospect of earning the advantage $u$---regardless of their irrelevant attributes. 
%
%
As an example, let's assume the true labels in the training data reflects individuals' effort-based utilities (as we will shortly argue, this assumption is not always morally acceptable, but for now let's ignore this issue). In this case, a perfect predictor---one that correctly predicts the true label for every individual---will distribute no predictive advantage, but such predictor almost never exists in real world applications. The deployed predictive model almost always distributes some utility among decision-subjects through the errors it makes. A fair model (with EOP rationale) would give all individuals with similar true labels the same prospect of earning this advantage---regardless of their irrelevant attributes. 
%
%

Our main conceptual contribution is to map the above setting to that of economic models of EOP (Section~\ref{sec:related}). We treat the predictive model $h$ as a policy, arbitrary features $\vz$ as circumstance, and the effort-based utilities $d$ as effort (Figure~\ref{fig:map}). In the next Section, we show that through our proposed mapping, most existing statistical notions of fairness can be interpreted as special cases of EOP.


\section{EOP for Supervised Learning}
In this Section, we show that many existing notions of algorithmic fairness, such as statistical parity~\citep{kamiran2009classifying,kamishima2011fairness,feldman2015certifying}, equality of odds~\citep{hardt2016equality}, equality of accuracy~\citep{buolamwini2018gender}, and predictive value parity~\citep{kleinberg2016inherent,zafar2017fairness,zafar2017dmt}, can be cast as special cases of EOP. The summary of our results in this Section can be found in Table~\ref{tab}.
To avoid any confusion with the notation, we define random variables $\vX, Y$ to specify the feature vector and true label for an individual drawn i.i.d. from distribution $F$. Similarly given a predictive model $h$, random variables $\hY=h(\vX), A^h, D^h, U^h$ specify the predicted label, actual utility, the effort-based utility, and advantage, respectively, for an individual drawn i.i.d. from $F$. When the predictive model in reference is clear from the context, we drop the superscript $h$ for brevity.
\begin{table*}
	\centering
    \begin{tabular}{ | l  |  c |  c | c |}
    \hline
    Notion of fairness &  Effort-based utility $D$ & Actual utility $A$ & Notion of EOP \\ \hline
    Accuracy Parity &  constant (e.g. $0$) & $(\hY - Y)^2$ & Rawlsian\\
    Statistical Parity & constant (e.g. $1$) & $\hY$ & Rawlsian\\
    Equality of Odds & $Y$ & $\hY$ & Rawlsian\\
    Predictive Value Parity  & $\hY$ & $Y$ & egalitarian\\
    \hline
    \end{tabular}
     \caption{Interpretation of existing notions of algorithmic fairness for binary classification as special instances of EOP.}
    \label{tab}
\end{table*}

\hl{
Before we formally establish a connection between algorithmic fairness and EOP, we shall briefly overview the Fair ML literature and remind the reader of the precise definition of previously-proposed notions of fairness. 
Existing notions of algorithmic fairness can be divided into two distinct categories: \emph{individual}-~\citep{dwork2012fairness, speicher2018a} and \emph{group}-level fairness.
Much of the existing work on algorithmic fairness has been devoted to the study of group (un)fairness, also called \emph{statistical} unfairness or \emph{discrimination}. Statistical notions of fairness require that given a classifier, a certain fairness metric is equal across all (protected or socially salient) groups.
More precisely, assuming $\vz \in \cZ$ specifies the group each individual belongs to, statistical parity seeks to equalize the percentage of people receiving a particular outcome across different groups: 
\begin{definition}[Statistical Parity]
A predictive model $h$ satisfies statistical parity if $\forall  \vz, \vz' \in \cZ, \forall \hy \in \cY:$
$$\PR_{(\vX, Y) \sim F}[h(\vX) = \hy \vert \vZ = \vz]  = \PR_{(\vX, Y) \sim F}[h(\vX) = \hy \vert \vZ = \vz'].$$
\end{definition}
Equality of odds requires the equality of false positive and false negative rates across different groups:
\begin{definition}[Equality of Odds]
A predictive model $h$ satisfies equality of odds if $\forall  \vz, \vz' \in \cZ, \forall y, \hy \in \cY:$
$$\PR_{(\vX, Y) \sim F}[\hY = \hy \vert \vZ = \vz, Y=y]  = \PR_{(\vX, Y) \sim F}[\hY = \hy \vert \vZ = \vz', Y = y].$$
\end{definition}
Equality of accuracy requires the classifier to make equally accurate predictions across different groups:
\begin{definition}[Equality of Accuracy]
A predictive model $h$ satisfies equality of accuracy if $\forall  \vz, \vz'  \in \cZ:$
$$\Exp_{(\vX, Y) \sim F}[ (\hY - Y)^2 \vert \vZ = \vz] = \Exp_{(\vX, Y) \sim F}[ (\hY - Y)^2 \vert \vZ = \vz'].$$
\end{definition}
Predictive value parity (which can be thought of as a weaker version of calibration~\citep{kleinberg2016inherent}) requires the equality of positive and negative predictive values across different group:
\begin{definition}[Predictive Value Parity]
A predictive model $h$ satisfies predictive value parity if $\forall  \vz, \vz' \in \cZ, \forall y, \hy \in \cY:$
$$\PR_{(\vX, Y) \sim F}[Y = y \vert \vZ = \vz, \hY=\hy]  = \PR_{(\vX, Y) \sim F}[Y = y \vert \vZ = \vz', \hY=\hy].$$
\end{definition}
}
\subsection{Statistical Parity, Equality of Odds and Accuracy as Rawlsian EOP}\label{sec:rawlsian}
We begin by translating Rawlsian EOP into the supervised learning setting using the mapping proposed in Figure~\ref{fig:map}. Recall that we proposed \emph{replacing $e$ with effort-based utility $d$}, and circumstance $c$ with vector of irrelevant features $\vz$.
In order for the definition of Rawlsian EOP to be morally acceptable, we need $d$ to not be affected by $\vz$ and the model $h$. In other words, it can only be a function of $\vw$ and $y$. Let $F^h(.)$ specify the distribution of utility across individuals under predictive model $h$. We define Rawlsian EOP for supervised learning as follows:
\begin{definition}[R-EOP for supervised learning]\label{eq:rawlsian}
Suppose $d = g(\vw,y)$. 
Predictive model $h$ satisfies Rawlsian EOP if for all $\vz,\vz' \in \cZ$ and all $d \in [0,1]$, $$F^h(. \vert \vZ = \vz, D = d) = F^h(. \vert \vZ = \vz', D = d).$$
\end{definition}
In the binary classification setting, if we assume the true label $Y$ reflects an individual's effort-based utility $D$, Rawlsian EOP translates into equality of odds across protected groups:\footnote{Note that \citet{hardt2016equality} referred to a weaker measure of algorithmic fairness (i.e. equality of true positive rates) as equality of opportunity.}
\begin{proposition}[Equality of Odds as R-EOP]
Consider the binary classification task where $\cY = \{0,1\}$.
Suppose $U = A - D$, $A = h(\vX) = \hY$ (i.e., the actual utility is equal to the predicted label) and $D =g(\vW,Y)$ where $g(\vW,Y) = Y$ (i.e., effort-based utility of an individual is assumed to be the same as their true label). Then the conditions of R-EOP are equivalent to those of equality of odds.
\end{proposition}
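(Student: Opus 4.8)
The plan is to unwind both definitions until they become literally the same statement about the conditional law of $\hY$ given $(\vZ, Y)$. Under the stated assumptions $D = Y$ and $U = A - D = \hY - Y$, so $F^h(\cdot \mid \vZ = \vz, D = d)$ is just the conditional distribution of $\hY - Y$ given $\vZ = \vz$ and $Y = d$. First I would note that $D = Y$ is supported on $\{0,1\}$, so the quantifier ``for all $d \in [0,1]$'' in Definition~\ref{eq:rawlsian} is non-vacuous only for $d \in \{0,1\}$ (for other values the conditioning event is empty and the identity holds trivially); I would also assume, as is standard, that $\PR[\vZ = \vz,\, Y = y] > 0$ for the $\vz,y$ being compared, so every conditional probability is well defined.

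Then I would fix $d \in \{0,1\}$ and work on the event $\{\vZ = \vz, Y = d\}$: there $U = \hY - d$ is a deterministic, injective function of $\hY$ (subtraction of the constant $d$ maps $\{0,1\}$ bijectively onto $\{-d, 1-d\}$), so the conditional law of $U$ is the pushforward of the conditional law of $\hY$ under this bijection. Because the map is a bijection, two such pushforward laws agree if and only if the underlying conditional laws of $\hY$ agree. Hence the R-EOP condition $F^h(\cdot \mid \vZ = \vz, D = d) = F^h(\cdot \mid \vZ = \vz', D = d)$ is equivalent to: the conditional distribution of $\hY$ given $(\vZ = \vz, Y = d)$ equals the conditional distribution of $\hY$ given $(\vZ = \vz', Y = d)$.

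Finally, since $\hY$ is $\{0,1\}$-valued, equality of these two conditional distributions is the same as $\PR[\hY = \hy \mid \vZ = \vz, Y = d] = \PR[\hY = \hy \mid \vZ = \vz', Y = d]$ for both $\hy \in \{0,1\}$ (the two values being complementary, so checking one suffices). Taking this for all $\vz, \vz' \in \cZ$ and all $d \in \{0,1\}$ is exactly the definition of equality of odds. The converse is the same chain read backwards: equality of odds gives equality of the conditional laws of $\hY$, which pushes forward to equality of the conditional laws of $U$, i.e. R-EOP; so the two conditions are equivalent.

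I do not expect a real obstacle here — the statement is essentially a definitional unpacking — but the step deserving care is the passage between ``same conditional law of $U$'' and ``same conditional law of $\hY$'': it is valid only because, after conditioning on $Y = d$, the map $\hY \mapsto U$ is a fixed bijection, so nothing is lost or created. The other thing I would make explicit is the mismatch between the continuous range $d \in [0,1]$ in the EOP definition and the binary support of $D$ (hence the reduction to $d \in \{0,1\}$), together with the positivity assumption on the conditioning events.
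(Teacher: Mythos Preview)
Your proposal is correct and follows essentially the same route as the paper: substitute $D=Y$, $A=\hY$, $U=\hY-Y$ into the R-EOP condition, observe that conditional on $Y=d$ the map $\hY\mapsto \hY-d$ is a bijection, and conclude that equality of the conditional laws of $U$ is equivalent to equality of the conditional laws of $\hY$, which is exactly equality of odds. You are slightly more explicit than the paper about two technical points---the reduction of the quantifier $d\in[0,1]$ to the support $\{0,1\}$, and the positivity of the conditioning events---but the argument is the same.
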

\begin{proof}
Recall that R-EOP requires that $\forall  \vz, \vz' \in \cZ, \forall d \in \cD$, and for all possible utility levels $u$:
$$\PR( U \leq u \vert \vZ = \vz, D = d) = \PR( U \leq u \vert \vZ = \vz', D = d).$$ 
Replacing $U$ with $(A - D)$, $D$ with $Y$, $A$ with $\hY$, the above is equivalent to 
\begin{eqnarray*}
&& \forall  \vz, \vz' \in \cZ, \forall y \in \{0,1\}, \forall u \in \{0,\pm 1\}:  \PR[ \hY - Y \leq u \vert \vZ = \vz, Y=y] = \PR[ \hY - Y \leq u \vert \vZ = \vz', Y=y]\\
&\Leftrightarrow & \forall  \vz, \vz' \in \cZ, \forall y \in \{0,1\}, \forall u \in \{0,\pm 1\}:  \PR[\hY \leq u + y \vert \vZ = \vz, Y=y] = \PR[\hY \leq u + y \vert \vZ = \vz', Y = y]\\
&\Leftrightarrow & \forall  \vz, \vz' \in \cZ, \forall y \in \{0,1\}, \forall \hy \in \{0,1\}:  \PR[\hY = \hy \vert \vZ = \vz, Y=y]  = \PR[\hY = \hy \vert \vZ = \vz', Y = y]
\end{eqnarray*}
where the last line is identical to the conditions of equality of odds for binary classification.
\end{proof}
The important role of the above proposition is to explicitly spell out the moral assumption underlying equality of odds as a measure of fairness: by measuring fairness through equality of odds, we implicitly assert that all individuals with the same true label have the same effort-based utility. This can clearly be problematic in practice: true labels don't always reflect/summarize accountability factors. At best, they are only a reflection of the current state of affairs---which itself might be tainted by past injustices. For these reasons, we argue that equality of odds can only be used as a valid measure of algorithmic fairness (with an EOP rationale) once the validity of the above moral equivalency assumption has been carefully investigated and its implications are well understood in the specific context it is utilized in.

Other statistical definitions of algorithmic fairness---namely statistical parity and equality of accuracy---can similarly be thought of as special instances of R-EOP. See Table~\ref{tab}. 
For example statistical parity can be interpreted as R-EOP if we assume all individuals have the same effort-based utility.\footnote{Statistical parity can be understood as equality of outcomes as well, if we assume $\hY$ reflects the outcome.} 
\begin{proposition}[Statistical Parity as R-EOP]
Consider the binary classification task where $\cY = \{0,1\}$.
Suppose $U = A - D$, $A = \hY$ and $D =g(\vW,Y)$ where $g(\vW,Y)$ is a constant function (i.e., effort-based utility of all individuals is assumed to be the same). Then the conditions of R-EOP is equivalent to statistical parity.
\end{proposition}
\begin{proof}
Without loss of generality, suppose $g(\vX,Y,h) \equiv 1$, i.e. all individuals effort-based utility 1.
Recall that R-EOP requires that $\forall  \vz, \vz' \in \cZ, \forall d \in \cD, \forall u \in \reals:$
$$ \PR( U \leq u \vert \vZ = \vz, D = d) = \PR( U \leq u \vert \vZ = \vz', D = d).$$ 
Replacing $U$ with $(A - D)$, $D$ with $1$, and $A$ with $\hY$, the above is equivalent to 
\begin{eqnarray*}
&& \forall  \vz, \vz' \in \cZ, \forall d \in \{1\}, \forall u \in \{0,-1\}: \PR[ \hY - D \leq u \vert \vZ = \vz, D=1] = \PR[ \hY - D \leq u \vert \vZ = \vz', D=1]\\
&\Leftrightarrow & \forall  \vz, \vz' \in \cZ, \forall u \in \{0, -1\}: \PR[\hY \leq u + 1 \vert \vZ = \vz] = \PR[\hY \leq u + 1 \vert \vZ = \vz']\\
&\Leftrightarrow & \forall  \vz, \vz' \in \cZ, \forall \hy \in \{0, 1\}:  \PR[\hY = \hy \vert \vZ = \vz]  = \PR[\hY = \hy \vert \vZ = \vz']
\end{eqnarray*}
where the last line is identical to the conditions of statistical parity for binary classification.
\end{proof}
As a real-world example where statistical parity can be applied, consider the following: suppose the society considers all patients to have the same effort-based utility---which can be enjoyed by access to proper clinical examinations. Now suppose that undergoing an invasive clinical examination has utility 1 if one has the suspected diseases and -1 otherwise, whereas avoiding the same clinical investigation has utility 1 if one does not have the suspected disease, and -1 otherwise. For all subjects, the effort-based utility is the same (the maximum utility, let us suppose). In other words, all people with a disease deserve the invasive clinical investigation and all people without the disease deserve to avoid it.  Consider a policy of giving clinical investigation to all the people without the disease and to no people without the disease. This would achieve an equal distribution of effort-based utility ($D$) and distribute no advantage $U$. Such policy, however, could only be achieved with a perfect accuracy predictor.  For an imperfect accuracy predictor, R-EOP would require the distribution of (negative, in this case) utility (U) to give the same chance to African Americans and white patients with (without) the disease to receive (avoid) an invasive clinical exam.

\begin{proposition}[Equality of Accuracy as R-EOP]
Consider the binary classification task where $\cY = \{0,1\}$.
Suppose $U = A - D$, $A = (\hY - Y)^2$ and $D = g(\vW,Y)$ where $g(\vW,Y) \equiv 0$  (i.e., effort-based utility of all individuals are assumed to be the same and equal to $0$). Then the conditions of R-EOP is equivalent to equality of accuracy.
\end{proposition}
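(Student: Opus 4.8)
The plan is to mirror exactly the two preceding propositions (Equality of Odds as R-EOP, Statistical Parity as R-EOP): start from the definition of R-EOP for supervised learning, substitute the specific choices of $U$, $A$, and $D$ dictated by the hypothesis, and then simplify the resulting probability statement until it coincides verbatim with the definition of Equality of Accuracy. Concretely, I would recall that R-EOP requires that for all $\vz,\vz' \in \cZ$, all $d \in \cD$, and all utility levels $u$, $\PR(U \le u \mid \vZ = \vz, D = d) = \PR(U \le u \mid \vZ = \vz', D = d)$.

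Next I would plug in $D \equiv 0$, so the conditioning event $D = d$ is non-trivial only for $d = 0$ and holds for everybody; and plug in $U = A - D = A = (\hY - Y)^2$. The inequality $\PR((\hY - Y)^2 \le u \mid \vZ = \vz) = \PR((\hY - Y)^2 \le u \mid \vZ = \vz')$ for all thresholds $u$ says precisely that the conditional distribution of the random variable $(\hY - Y)^2$ is the same across groups $\vz$ and $\vz'$. In the binary setting $(\hY - Y)^2 \in \{0,1\}$, so equality of the full CDF reduces to equality of a single number, $\PR((\hY - Y)^2 = 1 \mid \vZ = \vz)$, i.e. the misclassification rate; and since $(\hY - Y)^2$ is a $\{0,1\}$-valued random variable, its expectation equals this probability. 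Thus the R-EOP condition becomes $\Exp[(\hY - Y)^2 \mid \vZ = \vz] = \Exp[(\hY - Y)^2 \mid \vZ = \vz']$ for all $\vz, \vz'$, which is exactly the definition of Equality of Accuracy. I would display this as a short chain of equivalences, as in the previous proofs.

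There is no real obstacle here — the argument is a routine unwinding of definitions. The one point that deserves a sentence of care is the passage from "equality of the distribution of $(\hY-Y)^2$" to "equality of the mean of $(\hY-Y)^2$": in general equal means do not imply equal distributions, but here the random variable is Bernoulli-type (takes values in $\{0,1\}$), so its distribution is determined by its mean, and the two conditions are genuinely equivalent rather than just one-directional. I would also note in passing that the assumption $D \equiv 0$ is what makes the choice of $U = A - D$ collapse to $U = A$, which is why equality of accuracy is an instance of R-EOP with this particular (degenerate) effort-based utility. The proof block would therefore read, in skeleton form:
\begin{proof}
Since $D \equiv 0$ we have $U = A - D = A = (\hY - Y)^2$, and the only effort level with positive probability is $d = 0$, at which the conditioning event holds almost surely. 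Hence R-EOP requires, for all $\vz,\vz' \in \cZ$ and all $u$,
$$\PR[(\hY - Y)^2 \le u \mid \vZ = \vz] = \PR[(\hY - Y)^2 \le u \mid \vZ = \vz'].$$
As $(\hY - Y)^2 \in \{0,1\}$, equality of these CDFs for all $u$ is equivalent to $\PR[(\hY - Y)^2 = 1 \mid \vZ = \vz] = \PR[(\hY - Y)^2 = 1 \mid \vZ = \vz']$, and for a $\{0,1\}$-valued random variable this probability equals its expectation, so the condition is equivalent to
$$\Exp[(\hY - Y)^2 \mid \vZ = \vz] = \Exp[(\hY - Y)^2 \mid \vZ = \vz'],$$
which is precisely equality of accuracy.
\end{proof}
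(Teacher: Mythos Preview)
Your proposal is correct and follows essentially the same route as the paper: substitute $D\equiv 0$ and $U=(\hY-Y)^2$ into the R-EOP condition, drop the trivial conditioning on $D$, and reduce equality of the distribution of the $\{0,1\}$-valued variable $(\hY-Y)^2$ to equality of its expectation. If anything, your version is slightly more careful than the paper's, since you explicitly justify why the passage from equal distributions to equal means is a genuine equivalence (the variable is Bernoulli-type), whereas the paper simply writes the final $\Leftrightarrow$ without comment.
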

\begin{proof}
Recall that R-EOP requires that $\forall  \vz, \vz' \in \cZ, \forall d \in \cD, \forall u \in \reals:$
$$ \PR( U \leq u \vert \vZ = \vz, D = d) = \PR( U \leq u \vert \vZ = \vz', D = d).$$ 
Replacing $U$ with $(A - D)$, $D$ with $0$, and $A$ with $(\hY - Y)^2$, the above is equivalent to $\forall  \vz, \vz' \in \cZ, \forall d \in \{0\}, \forall u \in \{0, 1\}:$
$$\PR[ (\hY - Y)^2 - D \leq u \vert \vZ = \vz, D = d] = \PR[ (\hY - Y)^2 - D \leq u \vert \vZ = \vz', D = d]$$
We can then write:
\begin{eqnarray*}
&\Leftrightarrow & \forall  \vz, \vz', \forall u: \PR[ (\hY - Y)^2 = u \vert \vZ = \vz] = \PR[ (\hY - Y)^2 = u \vert \vZ = \vz']\\
&\Leftrightarrow & \forall  \vz, \vz' \in \cZ: \Exp[ (\hY - Y)^2 \vert \vZ = \vz] = \Exp[ (\hY - Y)^2 \vert \vZ = \vz']
\end{eqnarray*}
where the last line is identical to the conditions of equality of accuracy for binary classification.
\end{proof}
The critical moral assumption underlying equality of accuracy as a measure of fairness (with EOP rationale) is that errors reflect the advantage distributed by the predictive model among decision subjects. This exposes the fundamental ethical problem with adopting equality of accuracy as a measure of algorithmic fairness: it fails to distinguish between errors that are beneficial to the subject and those that are harmful. For example, in the salary prediction example, equality of accuracy would make no distinction between an individual who earns a salary higher than what they deserve, and someone who earns lower than their effort-based/deserved salary. 

\hl{
\paragraph{Max-min distribution vs. strict equality}
At a high level, R-EOP prescribes \emph{equalizing} advantage distribution across persons with the same effort-based utility. Some egalitarian philosophers have argued that we can remain faithful to the spirit (though not the letter) of EOP by delivering a \emph{max-min} distribution of advantage, instead of a strict egalitarian one~\citep{roemer2002equality}. The max-min distribution deviates from equality only when this makes the worst off group better off. Even though this distribution permits inequalities that do not reflect accountability factors, it is considered a morally superior alternative to equality, if it improves the utility of least fortunate.\footnote{The idea that inequalities are justifiable only when they result from a scheme arranged to maximally benefit the worst off position is expressed through the \emph{Difference Principle} by John Rawls in his theory of ``justice as fairness"~\citep{rawls1958justice}.} The max-min distribution addresses the ``leveling down" objection to equality: the disadvantaged group may be more interested in maximizing their absolute level of utility, as opposed to their relative utility compared to that of the advantaged group.
}

\subsection{Predictive Value Parity as Egalitarian EOP}\label{sec:calibration}

Note that predictive value parity (equality of positive and negative predictive values across different groups) can not be thought of as an instance of R-EOP, as it requires the effort-based utility of an individual to be a function of the predictive model $h$ (as we will shortly show, it assumes $D = h(\vX)$). This is in violation of the absolutist view of Rawlsian EOP. In this Section, we show that predictive value parity can be cast as an instance of luck egalitarian EOP.

We first specialize \citeauthor{roemer2002equality}'s model of Egalitarian EOP to the supervised learning setting.
Recall that egalitarian EOP allows the effort-based utility to be a function of the predictive model $h$, that is $D = f(\vX, Y,h)$. 
When this is the case, following the argument put forward by Roemer we posit that the distribution of effort-based utility for a given type $\vz$ (denoted by  $F^{\vz,h}_{D}$) is a characteristic of the type $\vz$, not something for which any individual belonging to the type can be held accountable. Therefore, an inter-type comparable measure of effort-based utility must factor out the goodness or badness of this distribution. We consider two individuals as being equally deserving if they sit at the same quantile or rank of the distribution of $D$ for their corresponding type. 

More formally, let the \emph{indirect utility distribution} function, denoted by $F^h(. \vert \vz, \pi)$, specify the distribution of utility for individuals of type $\vz$ at the $\pi$th quantile ($0 \leq \pi \leq 1$) of effort-based utility distribution, $F^{\vz,h}_{D}$.
Equalizing opportunities means choosing the predictive model $h$ to equalize the indirect utility distribution across types, at fixed levels of $\pi$:
\begin{definition}[e-EOP for supervised learning]
Suppose $d = f(\vx,y,h)$. 
Predictive model $h$ satisfies egalitarian EOP if for all $\pi \in [0,1]$ and $\vz, \vz' \in \cZ$,
\begin{equation}\label{eq:egalitarian}
F^h(. \vert \vZ=\vz, \Pi = \pi) = F^h(. \vert \vZ=\vz', \Pi = \pi).
\end{equation}
\end{definition}
Next, we show that predictive value parity can be thought of as a special case of e-EOP, where the predicted label/risk $h(\vX)$ is assumed to reflect the individual's effort-based utility, and the true label $Y$ reflects his/her actual utility.
\begin{proposition}[predictive value parity as e-EOP]\label{prop:calibration}
Consider the binary classification task where $\cY = \{0,1\}$.
Suppose $U = A - D$, $A = Y$ and $D = g(\vX,Y,h)$ where $g(\vX,Y,h) = h(\vX) = \hY$ (i.e., effort-based utility of an individual under $h$ is assumed to be the same as their predicted label). Then the conditions of e-EOP are equivalent to those of predictive value parity.
\end{proposition}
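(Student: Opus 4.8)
The goal is to show that under the stated substitutions, the e-EOP condition collapses to predictive value parity. The key realization is that when $D = \hY = h(\vX)$, the effort-based utility $D$ takes only two values, $0$ and $1$, so the quantile/rank structure of $F^{\vz,h}_D$ degenerates: the $\pi$-th quantile of a two-point distribution is $0$ for all $\pi$ below a threshold (namely $\pi \le \PR[\hY = 0 \mid \vZ = \vz]$) and $1$ above it. I would begin by spelling out exactly what ``individuals of type $\vz$ at the $\pi$-th quantile of $F^{\vz,h}_D$'' means here: conditioning on $\Pi = \pi$ amounts to conditioning on $D = d_\pi(\vz)$, where $d_\pi(\vz) \in \{0,1\}$ is whichever label the $\pi$-th quantile selects. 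So the indirect utility distribution $F^h(\,\cdot \mid \vZ = \vz, \Pi = \pi)$ is just the conditional distribution of $U = A - D = Y - \hY$ given $\vZ = \vz$ and $\hY = d_\pi(\vz)$.

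**Main steps, in order.** First, rewrite $F^h(\,\cdot \mid \vZ = \vz, \Pi = \pi)$ as the law of $Y - \hY$ conditioned on $\vZ = \vz$ and $\hY = \hy$, for the value $\hy \in \{0,1\}$ picked out by quantile $\pi$. Second, observe that conditioned on $\hY = \hy$, the variable $Y - \hY = Y - \hy$ is an affine function of $Y$, so its conditional distribution is determined by, and determines, the conditional distribution of $Y$ given $\vZ = \vz, \hY = \hy$ — i.e. by the numbers $\PR[Y = y \mid \vZ = \vz, \hY = \hy]$ for $y \in \{0,1\}$. Third, argue the quantile matching is automatic: for the e-EOP equality to hold for \emph{all} $\pi \in [0,1]$, one needs, for each label value $\hy$ that is realized as some quantile, the equality $F^h(\,\cdot \mid \vZ = \vz, \hY = \hy) = F^h(\,\cdot \mid \vZ = \vz', \hY = \hy)$; conversely predictive value parity gives exactly these equalities for $\hy = 0$ and $\hy = 1$, which then yields the e-EOP condition at every $\pi$. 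Assembling these, the chain of equivalences runs: e-EOP $\Leftrightarrow$ $\PR[Y - \hY \le u \mid \vZ = \vz, \hY = \hy]$ is independent of $\vz$ for all $\hy, u$ $\Leftrightarrow$ $\PR[Y = y \mid \vZ = \vz, \hY = \hy]$ is independent of $\vz$ for all $y, \hy$ $\Leftrightarrow$ predictive value parity. This mirrors the structure of the Equality-of-Odds proof, with the roles of $Y$ and $\hY$ swapped in the conditioning.

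**The main obstacle.** The delicate point is the handling of quantiles of the degenerate two-point distribution of $D$, and in particular whether the boundary value $\pi = \PR[\hY = 0 \mid \vZ = \vz]$ — which may differ across groups — causes trouble. The cleanest way around this is to note that as $\pi$ ranges over $[0,1]$, the conditioning event $\{D = d_\pi(\vz)\}$ ranges over $\{\hY = 0\}$ and $\{\hY = 1\}$ (both of which we may assume have positive probability within each group, else predictive value parity is vacuous for that $\hy$), so requiring the e-EOP equality for all $\pi$ is exactly requiring it for both conditioning events $\hY = 0$ and $\hY = 1$ — the particular threshold where the switch happens is irrelevant because we quantify over all $\pi$. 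I would state this as a short lemma-free remark rather than belabor the measure-theoretic bookkeeping, since the paper's footnote already flags that Roemer's deterministic $u$ has been relaxed to allow non-deterministic dependence, and the two-point case makes everything finite and explicit anyway.
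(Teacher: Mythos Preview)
Your proposal is correct and follows essentially the same approach as the paper's proof: reduce the quantile conditioning $\Pi = \pi$ to conditioning on $\hY \in \{0,1\}$ (since $D = \hY$ is two-valued), then substitute $U = Y - \hY$ and observe that the conditional law of $Y - \hy$ given $\hY = \hy$ determines and is determined by $\PR[Y = y \mid \vZ = \vz, \hY = \hy]$. You are in fact more explicit than the paper about the quantile-threshold subtlety; the paper dispatches it in a single sentence (``there are only two ranks/quantiles possible'') and proceeds exactly as you outline.
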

\begin{proof}
Recall that e-EOP requires that $\forall  \vz, \vz' \in \cZ, \forall \pi \in [0,1],$ and $\forall u \in \reals: $
$$\PR[ U \leq u \vert \vZ = \vz, \Pi=\pi] = \PR[ U \leq u \vert \vZ = \vz', \Pi=\pi].$$ 
Note that since $D = \hY$ and in the binary classification, $\hY$ can only take on two values, there are only two ranks/quantiles possible in terms of the effort-based utility---corresponding to $\hY = 0$ and $\hY = 1$. So the above condition is equivalent to $\forall  \vz, \vz' \in \cZ, \forall \hy \in \{0,1\}, \forall u \in \{0,\pm 1\}: $
$$\PR[ U \leq u \vert \vZ = \vz, \hY=\hy] = \PR[ U \leq u \vert \vZ = \vz', \hY=\hy].$$ 
Replacing $U$ with $(A - D)$, $D$ with $\hY$, $A$ with $Y$, the above is equivalent to 
\begin{eqnarray*}
&& \forall  \vz, \vz' \in \cZ, \forall \hy \in \{0,1\}, \forall u \in \{0,\pm 1\}: \PR[  Y - \hY \leq u \vert \vZ = \vz, \hY=\hy] = \PR[ Y - \hY\leq u \vert \vZ = \vz', \hY=\hy]\\
&\Leftrightarrow & \forall  \vz, \vz' \in \cZ, \forall \hy \in \{0,1\}, \forall u \in \{0,\pm 1\}: \PR[Y \leq u + \hy \vert \vZ = \vz, \hY=\hy] = \PR[Y \leq u + \hy \vert \vZ = \vz', \hY=\hy]\\
&\Leftrightarrow &  \forall  \vz, \vz' \in \cZ, \forall \hy \in \{0,1\}, \forall y \in \{0, 1\}: \PR[Y = y \vert \vZ = \vz, \hY=\hy]  = \PR[Y = y \vert \vZ = \vz', \hY=\hy]
\end{eqnarray*}
where the last line is identical to predictive value parity.
\end{proof}

Note that there are two assumptions needed to cast predictive value parity as an instance of e-EOP: 1) the predicted label/risk $h(\vx)$ reflects the individual's effort-based utility; and 2) the true label $Y$ reflects his/her actual utility. The plausibility of such moral assumptions must be critically evaluated in a given context before predictive parity can e employed as a valid measure of fairness. Next, we discuss the plausibility of these assumptions through several real-world examples.

\hl{
\paragraph{Plausibility of assumption 1} The choice of predicted label, $h(\vX)$, as the indicator of effort-based utility, may sound odd at first. However, there are several real-world settings in which this assumption is considered appropriate.
Consider the case of driving under influence (DUI): the law considers all drivers equally at risk of causing an accident---due to the consumption of alcohol or drugs---equally accountable for their risk and punishes them similarly, even though only some of them will end up in an actual accident, and the rest won't. In this context, the potential/risk of causing an accident---as opposed to the actual outcome---justifies unequal treatment, because we believe differences in actual outcomes among equally risky individuals is mainly driven by arbitrary factors, such as brute luck. Arguably, such factors should never specify accountability. 
}

\hl{
\paragraph{Can assumptions 1 and 2 hold simultaneously?}
The following is an example in which ssumptions 1 and 2 hold simultaneously (in particular, the true label $Y$ specifies the actual utility an individual receives subsequent to being subject to automated decision making).
Consider the students of a course, offered online and open to students from all over the world. The final assessment of students enrolled in the course includes an essential oral exam. The oral exam is very challenging and extremely competitive. The instructors hold an exam session every month. Every student is allowed to take the oral exam, but since resources for oral examinations are limited, to discourage participation without preparation, the rule is that, if a student fails the exam, he/she has to wait one year before taking the exam again.
Suppose that students belong to one of the two groups: African Americans and Asians. African American and Asian students study in different ways, with different cognitive strategies. As a result, an African American student with 0.9 passing score may correspond to a very different feature vector compared to an Asian student with a 0.9 passing score. 
Suppose a predictive model is used to predict the outcome of the oral exam for individual students, based on the student's behavioral data. (The online learning platform records data on how students interact with the course materials.) Students are given a simple ``pass/fail" prediction to help them make an informed choice about when to take the exam. 
In this example, we argue that both assumptions underlying predictive value parity are satisfied:
\begin{enumerate}
\item $A = Y$: Passing the exam is a net utility, not passing the exam is a net disutility (due to the one year delay). Also, being predicted to pass per se has no utility associated with it.
\item $D = \hY$. It is plausible to consider students morally responsible for their chances of success, because the predictions are calculated based on how they have studied the course material.
\end{enumerate}
In this example, a fair predictor (with EOP rationale) should satisfies predictive value parity. That means: students who are predicted to pass, should be equally likely to pass the exam, irrespective of their race. 
}


\paragraph{On Recent Fairness Impossibility Results} 
Several papers have recently shown that group-level notions of fairness, such as predictive value parity and equality of odds, are generally incompatible with one another and cannot hold simultaneously~\citep{kleinberg2016inherent,friedler2016possibility}.
Our approach confers a moral meaning to these impossibility results: they can be interpreted as contradictions between fairness desiderata reflecting different and irreconcilable moral assumptions. For example predictive value parity and equality of odds make very different assumptions about the effort-based utility $d$: Equality of odds assumes all persons with similar true labels are equally accountable for their labels, whereas predictive value parity assumes all persons with the same predicted label/risk are equally accountable for their predictions. 
Note that depending on the context, usually only one (if any) of these assumptions is morally acceptable. We argue, therefore, that unless we are in the highly special case where $Y = h(\vX)$, it is often unnecessary---from a moral standpoint---to ask for both of these fairness criteria to be satisfied simultaneously.

\begin{figure*}[t!]
    \centering
    \begin{subfigure}[b]{0.32\textwidth}
        \includegraphics[width=\textwidth]{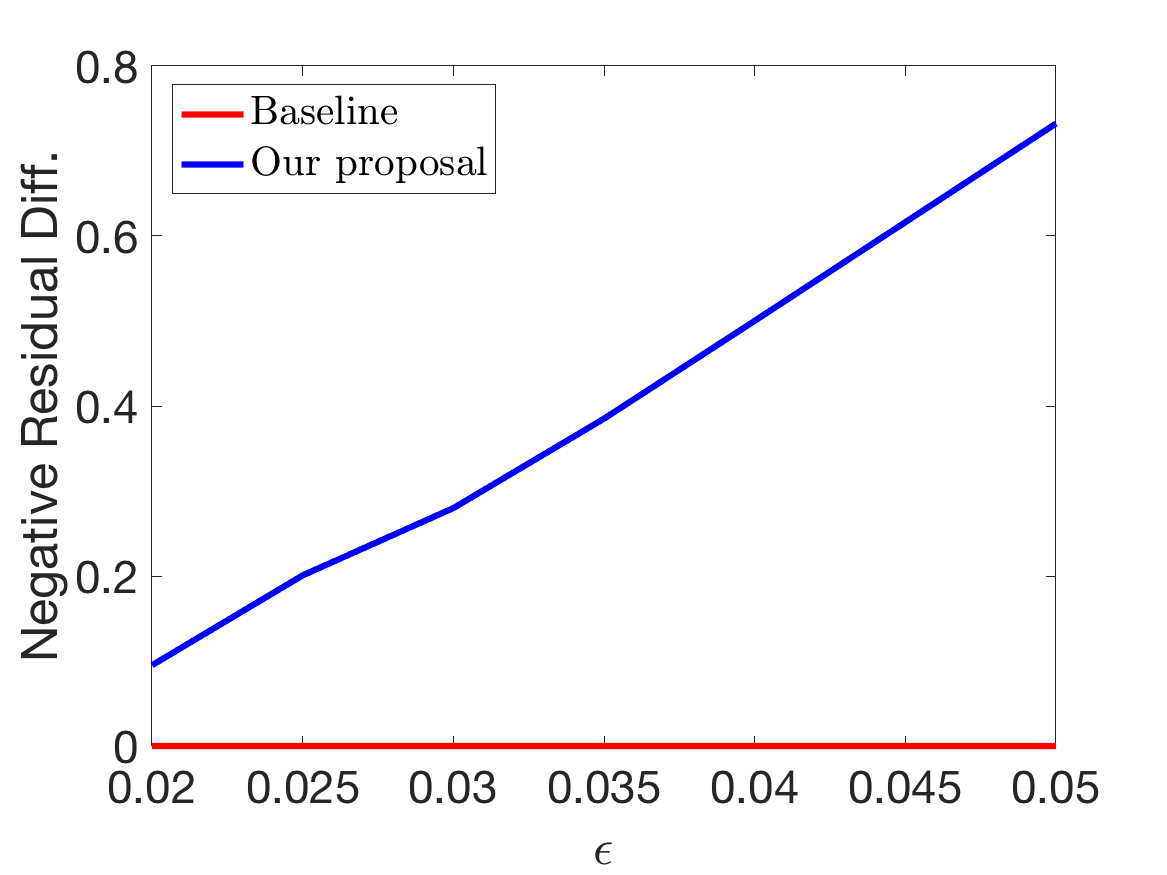}
        \caption{ }
        \label{fig:crime_nrd}
    \end{subfigure}
    \begin{subfigure}[b]{0.32\textwidth}
        \includegraphics[width=\textwidth]{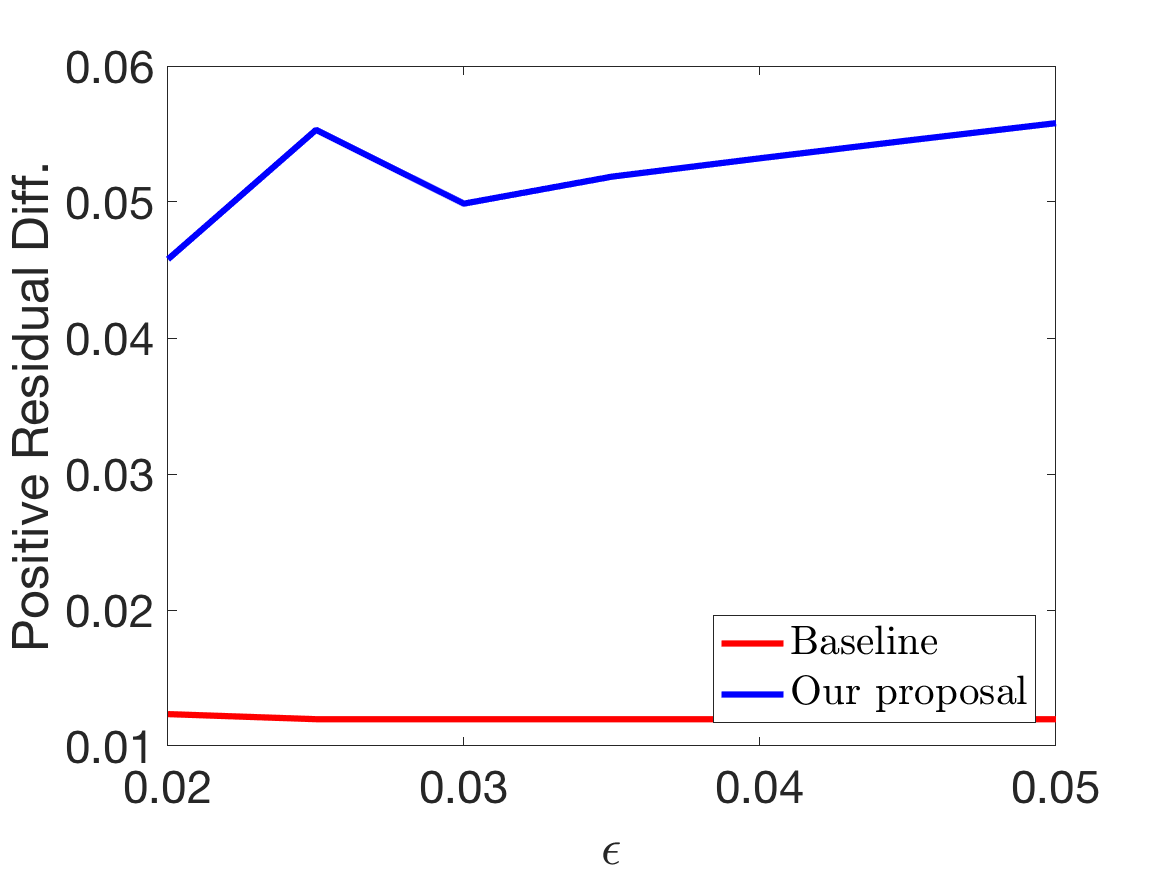}
        \caption{ }
        \label{fig:crime_prd}
    \end{subfigure}
    \begin{subfigure}[b]{0.32\textwidth}
        \includegraphics[width=\textwidth]{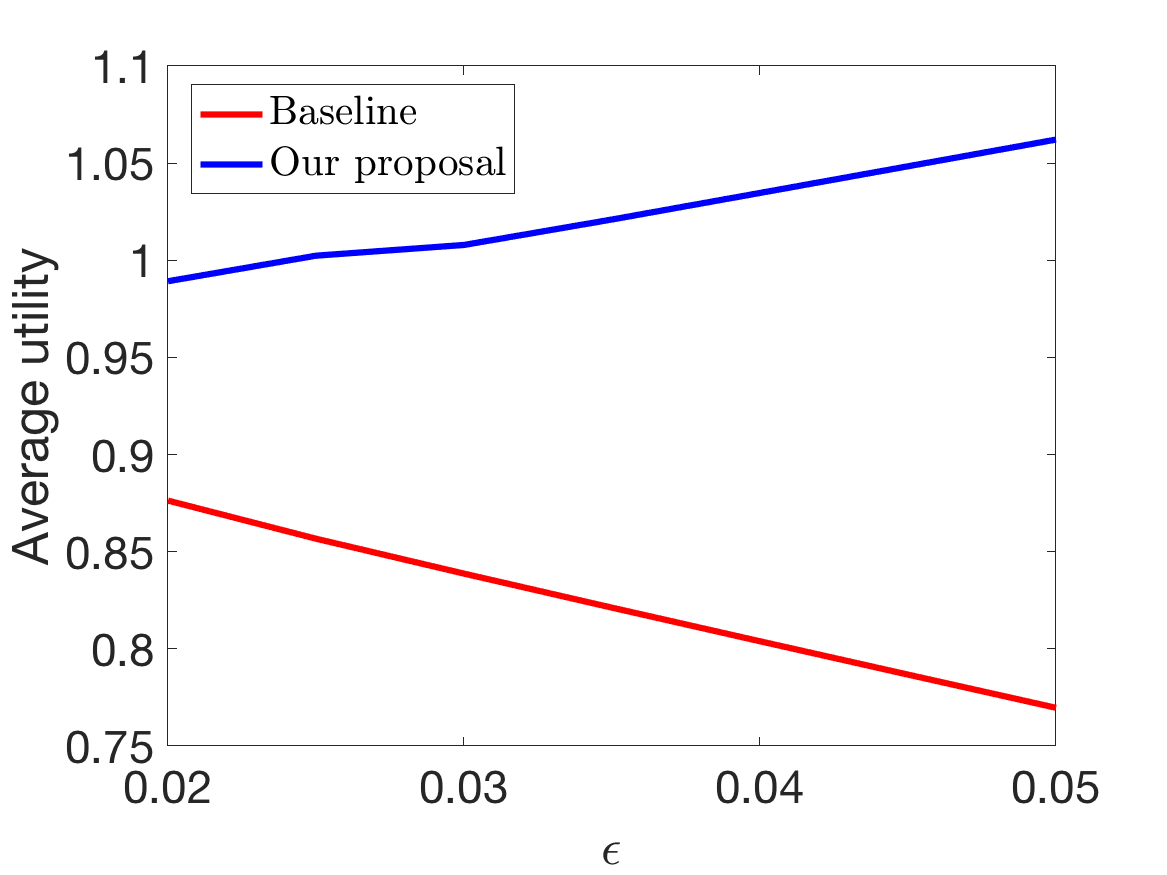}
        \caption{ }
        \label{fig:crime_util}
    \end{subfigure}
       \caption{NRD, PRD, and average utility of the disadvantaged group as a function of $\epsilon$ (the upperbound on mean squared error). The notion of fairness enforced on algorithmic decisions can have a devastating impact on the welfare of the disadvantaged group.}
       \label{fig:experiments}
\end{figure*}

\section{Egalitarian Measures of Fairness}\label{sec:egalitarian}
In this section, inspired by Roemer's model of egalitarian EOP we present a new family of measures for algorithmic fairness. Our proposal is applicable to supervised learning tasks beyond binary classification, and to utility functions beyond the simple linear form specified in Equation~\ref{eq:util}. We illustrate our proposal empirically, and compare it with existing notions of (un)fairness for regression. Our empirical findings suggest that employing a measure of algorithmic (un)fairness when its underlying assumptions are not met, can have devastating consequences on the welfare of decision subjects.

\subsection{A New Family of Measures}
For supervised learning tasks beyond binary classification (e.g. multiclass classification or regression), the requirement of equation \ref{eq:egalitarian} becomes too stringent, as there will be (infinitely) many quantiles to equalize utilities over. The problem persists even if we relax the requirement of equal utility distributions to maximizing the minimum expected utility at each quantile. More formally,
let $v^\vz(\pi,h)$ specify the expected utility of individuals of type $\vz$ at the $\pi$th quantile of the effort-based utility distribution. 
For $\pi \in [0,1]$, we say that a predictive model $h^\pi$ satisfies egalitarian EOP at the $\pi$-slice of the population, if:
$$h^\pi \in \arg\max_{h \in \cH} \min_{\vz \in \cZ} v^\vz(\pi,h).$$
Assuming we are concerned only with the $\pi$-slice, then $h^\pi$ would be the equal-opportunity predictive model. Unfortunately, when we move beyond binary classification, we generally cannot find a model that is simultaneously optimal for all ranks $\pi \in [0,1]$. Therefore, we need to find a compromise. Following \citeauthor{roemer2002equality}, we  define the e-EOP predictive model as follows: 
\begin{equation}\label{eq:int_1}
h^* \in \arg\max_{h \in \cH} \min_{\vz \in \cZ} \int_0^1 v^\vz(\pi,h) d\pi.
\end{equation}
That is, we consider $h^*$ to be an e-EOP predictive model if it maximizes the expected utility of the \emph{worst off group} (i.e. $\int_0^1 v^\vz(\pi,h) d\pi$).\footnote{Roemer in fact proposes two further alternatives: in the first solution, the objective function for each $\pi$-slice of the population is assumed to be $\min_{\vz \in \cZ} v^\vz(\pi,h)$---which is then weighted by the size of the slice. In the second solution, he declares the equal opportunity policy to be the average of the policies $h^\pi$. Roemer expresses no strong preference for any of these alternatives, other than the fact that computational simplicity sometimes suggests one over the others~\citep{roemer2002equality}. This is in fact the reasoning behind our choice of Equation~\ref{eq:int_1}.}
Replacing the expectation with its in-sample analogue, our proposed family of e-EOP measures can be evaluated on the data set $T$ as follows:
\begin{equation*}
\cF(h,T) = \min_{\vz \in \cZ}  \frac{1}{n_\vz} \sum_{i \in T: \vz_i =\vz} u(\vx_i,y_i,h)
\end{equation*}
where $u(\vx_i,y_i,h)$ is the utility an individual with feature vector $\vx_i$ and true label $y_i$ receives when predictive model $h$ is deployed; and $n_\vz$ is the number of individuals in $T$ whose arbitrary features value is $\vz \in \cZ$. (The arbitrary features value $\vz$ specifies the (intersectional) \emph{group} each individual belongs to. We use $m$ to denote the number of such (intersectional) groups. For simplicity in our illustration, we refer to these groups as $G_1,\cdots,G_m$.)

To guarantee fairness, we propose the following in-processing method: maximize the expected utility of the worst off group, subject to error being upper bounded (by $\epsilon$).
\begin{eqnarray}\label{eq:proposal}
\max_{h \in \cH} && \cF(h,T) \nonumber\\
\text{s.t. }&&  \cL(T,h) \leq \epsilon 
\end{eqnarray}
Note that if the loss function $\cL$ is convex and $\cF$ is concave in model parameters, Optimization~\ref{eq:proposal} is convex and can be solved efficiently.

We remark that our notion of fairness does not require us to explicitly specify the effort-based utility $D$, since it only compares the \emph{overall} expected utility of different groups with one another---without the need to explicitly compare the utility obtained by individuals at a particular rank of $D$ across different groups. Furthermore, the utility function, $u(\vx,y,h)$, does not have to be restricted to take the simple linear form specified in Equation~\ref{eq:util}. 

\subsection{Illustration}\label{sec:experiments}
Next, we illustrate our proposal on the \emph{Crime and Communities data set}~\citep{communities_crime_dataset}. The data consists of 1994 observations, each corresponding to a community/neighborhood in the United States. Each community is described by 101 features, specifying its socio-economic, law enforcement, and crime statistics extracted from the 1995 FBI UCR. Community type (e.g. urban vs. rural), average family income, and the per capita number of police officers in the community are a few examples of the explanatory variables included in the dataset. The target variable ($Y$) is the ``per capita number of violent crimes". We train a linear regression model, $\vtheta \in \reals^k$, on this dataset to predict the per capita number of violent crimes for a new community. We hypothesize that crime predictions can affect the law enforcement resources assigned to the community, the value of properties located in the neighborhood, and business investments drawn to it. 

We preprocess the original dataset as follows: we remove the instances for which target value is unknown. Also, we remove features whose values are missing for more than $80\%$ of instances. We standardize the data so that each feature has mean 0 and variance 1. We divide all target values by a constant so that labels range from $0$ to $1$. 
Furthermore, we flip all labels ($ y \rightarrow 1-y$), so that higher $y$ values correspond to more desirable outcomes.
We assume a neighborhood belongs to the protected group ($G_1$) if the majority of its residents are non-Caucasian, that is, the percentage of African American, Hispanic, and Asian residents of the neighborhood combined, is above $50\%$. This divides the training instances into two groups $G_0, G_1$. We include this group membership information as the (sensitive) feature $z$ in the training data ($z_i = 1[i \in G_1]$).  

For simplicity, we assume the utility function $u$ has the following functional dependence on $\vx$ and $\vtheta$: $u(z,y,\hy)$; that is, $u$'s dependence on $\vx$ and $\vtheta$ are through $z$ and $\hy=\vtheta.\vx$, respectively. 
For communities belonging to $G_0$ and $G_1$, we assume $u(z,y,\hy) = f(z,y,\hy) - g(z,y,\hy)$ is respectively defined as follows:
\begin{itemize}
\item For a majority-Caucasian neighborhood, 
$$u(0, y, \hy) = (1 + 0.5\hy y) - (0.5\hy).$$
\item For a minority-Caucasian neighborhood,
$$u(1, y, \hy) = (1 + 3\hy y + 2\hy) - ( y ).$$ 
\end{itemize}
At a high level, neighborhoods in both groups enjoy a high utility if their predicted and actual crime rates are low, simultaneously (note that the absolute value of utility derived from this case is higher for the minority). The minority-Caucasian group further benefits from low crime predictions (regardless of actual crime rates). We assume the effort-based utility for the minority group, is one minus the actual crime rate ($y$), and for the majority group, it is proportional to one minus the predicted crime rate ($0.5\hy$).
Note that these utility functions are made up for illustration purposes only, and do not reflect any deep knowledge of how crime and law enforcement affect the well-being of a neighborhood's residents. 

To illustrate our proposal, we solve the following \emph{convex} optimization problem for different values of $\epsilon$:
\begin{eqnarray}\label{eq:opt_true}
\max_{\sigma, \vtheta} &&  \sigma \nonumber\\
\text{s.t. }&&  \frac{1}{n_0}\sum_{i \in G_0} -0.5\vtheta. \vx_i  + 0.5(\vtheta. \vx_i) y_i + 1 \geq \sigma   \nonumber\\
&&  \frac{1}{n_1}\sum_{i \in G_1} 2\vtheta. \vx_i + 3(\vtheta. \vx_i) y_i - y_i +1 \geq \sigma   \nonumber\\
&& \frac{1}{n} \sum_{i=1}^n (\vtheta. \vx_i - y_i)^2 + \lambda \Vert \vtheta \Vert_1 \leq \epsilon 
\end{eqnarray}
We choose the value of $\lambda$ by running a 10-fold cross validation on the data set.
For each value of $\epsilon$ (Mean Squared Error), we measure the following quantities via 5-fold cross validation:
\begin{itemize}
%
\item \textbf{Positive residual difference}~\citep{calders2013controlling} is the equivalent of false positive rate in regression, and is computed by taking the absolute difference of mean positive residuals across the two groups:
$$\left|\frac{1}{n_1^+} \sum_{i \in G_1} \max\{0,(\hat{y}_i- y_i)\}  - \frac{1}{n_0^+}\sum_{i \in G_0} \max\{0,(\hat{y}_i -y_i)\}\right|.$$
In the above, $n_g^+$ is the number of individuals in group $g \in \{0,1\}$ who get a positive residual, i.e. $\hat{y}_i - y_i \geq 0$. 
\item \textbf{Negative residual difference}~\citep{calders2013controlling} is the equivalent of false negative rate in regression, and is computed by taking the absolute difference of mean negative residuals across the two groups.
%
\item \textbf{Average utility} of the disadvantaged group is computed by taking the average utility of all individuals in the test data set:
$$ \min\left\lbrace \frac{1}{n_0} \sum_{i \in G_0} u(\vx_i, y_i, h) , \frac{1}{n_1} \sum_{i \in G_1} u(\vx_i, y_i, h) \right\rbrace.$$
\end{itemize}
Figure~\ref{fig:experiments} shows the results of our simulations. Blue curves correspond to our proposal (Optimization~\ref{eq:opt_true}). As evident in Figures~\ref{fig:crime_nrd} and \ref{fig:crime_prd}, positive and negative residual difference increase with $\epsilon$, while the average utility increases (see Figure~\ref{fig:crime_util}).

To compare our proposal with existing measures of (un)fairness for regression, we utilize the in-processing method of \citet{heidari2018fairness}. The method enforces an upperbound on $\sum_i (\hy_i- y_i)$, and has been shown to control the positive and negative residual difference across the two groups. More precisely, we solve the following optimization problem for different values of $\epsilon$:
\begin{equation}\label{eq:opt_wrong}
\max_{\vtheta} \frac{1}{n}\sum_{i \in T}  \vtheta. \vx_i  - y_i \text{  s.t.  } \frac{1}{n} \sum_{i=1}^n (\vtheta. \vx_i - y_i)^2 + \lambda \Vert \vtheta \Vert_1 \leq \epsilon
\end{equation}
Red curves in Figure~\ref{fig:experiments} correspond to this baseline.
%
As evident in Figures~\ref{fig:crime_nrd} and \ref{fig:crime_prd}, by enforcing a lower bound on $\sum_i (\hy_i- y_i)$, positive and negative residual difference go to 0 very quickly---as expected. However, the trained model performs very poorly in terms of average utility of the disadvantaged group. 

\section{Conclusion}\label{sec:conclusion}
Our work makes an important contribution to the rapidly growing line of research on algorithmic fairness---by providing a unifying moral framework for understanding existing notions of fairness through philosophical interpretations and economic models of EOP. 
We showed that the choice between statistical parity, equality of odds, and predictive value parity can be mapped systematically to specific moral assumptions about what decision subjects morally deserve. Determining accountability features and effort-based utility is arguably outside the expertise of computer scientists, and has to be resolved through the appropriate process with input from stakeholders and domain experts. In any given application domain, reasonable people may disagree on what constitutes factors that people should be considered morally accountable for, and there will rarely be a consensus on the most suitable notion of fairness. This, however, does not imply that in a given context all existing notions of algorithmic fairness are equally acceptable from a moral standpoint.

\bibliographystyle{named}
\bibliography{FAT-Main}

\end{document}